\newtheorem{thm}{Theorem}[section]
\theoremstyle{definition}
\newtheorem{defn}[thm]{Definition}
\theoremstyle{remark}
\theoremstyle{note}
\newcolumntype{A}{D{.}{.}{2.3}}
\newcommand{\Rmnum}[1]{\expandafter\@slowromancap\romannumeral #1@}
\DeclareMathOperator*{\argmax}{arg \, max}
\author{David A. Meyer and Asif Shakeel}
\address{Department of Mathematics, University of California/San Diego, La Jolla, CA 92093-0112, USA}
\email{dmeyer@math.ucsd.edu, ashakeel@ucsd.edu}
\date{\today}
\title[Estimating an ADHMM] {Estimating an Activity Driven Hidden Markov Model}
\begin{document}

\begin{abstract}
We define a Hidden Markov Model (HMM) in which each hidden state has 
time-dependent {\sl activity levels\/} that drive transitions and 
emissions, and show how to estimate its parameters.  Our construction
is motivated by the problem of inferring human mobility on sub-daily 
time scales from, for example, mobile phone records.
\end{abstract}

\maketitle

\section{Introduction} \label{section:intro}
  
Hidden Markov models (HMMs) are stochastic models for systems with a 
set of unobserved states between which the system hops stochastically, 
sometimes emitting a signal from some alphabet, with probabilities 
that depend upon the current state.  The situation in which we are
specifically interested is human mobility, partially observed, 
{\it i.e.}, occasional signals about a person's location.  For 
example, consider the cells of a mobile phone network, from which a 
user can make calls.  In this case the states of a HMM are the cells, 
and the emitted signals are the cell itself, if a call is made by a 
particular user during each of a sequence of time intervals, or 
nothing (0), if that user does not make a call.  In the latter case, 
the state (location) of the user is `hidden', and must be inferred, 
while in the former case, assuming no errors in the data, the `hidden' 
state is revealed by the call record.\footnote{Load balancing, in 
which calls may be routed through cell towers that are not the 
closest, makes this not strictly true.  The general model we consider 
here allows for this possibility.}  Since these are data from a 
{\sl mobile\/} phone network, a user can move from cell to cell.

Although many analyses of human mobility have estimated no more than 
rather crude statistics like the radius of gyration, the fraction of 
time spent at each location, or the entropy of the timeseries of 
locations~\cite{GHB,SQBB,Csajietal,Lenormandetal}, others have used 
HMMs to describe partially observed human mobility and have estimated 
their parameters~\cite{FLM,MRM,Perkinsetal}.  With short time steps, 
however, a standard HMM (with time-independent parameters) is not a
plausible model, since human mobility behavior changes according to, 
for example, the time of day~\cite{KCPN,SQBB,Csajietal,CAAMG}.  We 
would like to create, therefore, a HMM with time-{\sl dependent\/} 
parameters.  Of course, allowing, for example, arbitrary 
transition/emission probabilities at each time step, would lead to an 
extremely underdetermined model.  Rather, we need a model with only a 
few additional parameters to capture the time-dependence of human 
mobility.  Since the total numbers of trips~\cite{KCPN,SQBB,CAAMG} 
and mobile phone calls~\cite{Csajietal,DMRRS} vary with time of day 
and day of week, we develop a time-dependent HMM in which the 
non-trivial transition and emission probabilities are proportional to
{\sl activity levels}, {\it i.e.}, to some given functions modeling 
how active humans are at different times and places.

Since the transition and emission probabilities in our HMM are not
constant in time, it is a {\sl non-stationary\/} HMM.  Many 
generalizations of HMMs have been considered previously, of course, as
more faithful models of various real systems.  Some of these are 
non-stationary:  Deng, for example, considers a class of models in 
which the emission probabilities are somewhat non-Markovian, depending
on a number of previous emissions, and also have polynomial-in-time
trend components which are to be estimated~\cite{Deng}.  
{\sl Duration\/} HMMs (DHMMs), first suggested by 
Ferguson~\cite{Ferguson}, allow a sort of non-stationarity in the 
state transition process by including a randomly chosen duration each 
time the state changes, {\it i.e.}, a number of time steps without a 
transition away from that state.  This kind of model has been 
generalized to make the transition probabilities functions of the 
number of steps the system has been in the current 
state~\cite{SinKim}.  In a different direction, since one can think of
transitions between the hidden states with different emission 
probability distributions as a kind of non-stationarity, {\sl triplet
Markov chains\/} (TMCs) include an auxiliary set of underlying states, 
each of which corresponds to a different stationary regime for a 
HMM~\cite{LanchantinPieczynski}.  Our approach is different than that
of DHMMs and TMCs in that the time dependence of the transition and
emission probabilities is not intrinsic and random, but rather 
exogenous and deterministic.  Furthermore, unlike Deng's 
models~\cite{Deng}, we take the ``trend'' part of the time dependence 
to be given, not an additional (set of) parameter(s) to be estimated.

\smallskip
Formally, our model consists of $N$ possible hidden states, and we 
denote by $(X_t)\in [N]^T$ the time series of $T$ hidden states (for 
any $n\in\mathbb{N}$, $[n]=\{1,\ldots,n\}$).  State transitions happen 
according to a sequence of matrices giving the conditional 
probabilities of transitions,
\begin{equation*} 
 A(t) = \bigl(a_{ij}(t)\bigr),{\rm\ where\ } 
 a_{ij}(t) = \Pr(X_{t+1} = i \mid X_t = j).
\end{equation*}
At each state we observe an emission that takes a value from the set 
$[M]\cup\{0\}$, where $0$ denotes ``absence of an emission''.  Let 
$y=(y_t)\in ([M]\cup\{0\})^T$ be the series of observed emissions.  
The probability of emission $s$ at time $t$, from state $j$, is
\begin{equation*} 
 b_{s j}(t) = \text{Pr}(Y_t=s\mid X_t =j).
\end{equation*} 

We define time varying {\sl activity levels\/} for state $j$ by a 
pair of functions with non-negative values, $(f_j,g_j):[T]\to [0,1]^2$, 
the {\sl activity functions}, which modulate transitions and emissions 
from the state, respectively.  Given a state $j$, the transition 
probabilities from that state are functions of {\sl transition 
parameters\/} 
$(\tau_{ij}\ge0)$, $i\neq j\in [N]$, and the activity level:
\begin{equation} \label{aij}
 a_{ij}(t) =  
 \left\{ 
  \begin{array} {ll}
   f_j(t) \tau_{ij}                             &\text{if } i\neq j; \\
   1- f_j(t) \sum_{i\in [N], i\neq j} \tau_{ij} &\text{if } i=j,
  \end{array}
 \right.
\end{equation}   
subject to the constraints that for all $j\in [N]$ and all $t\in[T]$,
\begin{equation} \label{maxconstrtaut}
 f_{j}(t)\sum_{i\in [N], i\neq j}\tau_{ij} \leq 1.                                                               
\end{equation}
In practice we may have {\it a priori\/} knowledge that some 
transitions do not occur, so that $(a_{ij})$ (and $(\tau_{ij})$) have
some entries set to $0$.  For example, with 10 minute time intervals 
and states representing cells in a mobile phone network, transitions 
between sufficiently distant cells are precluded.

Similarly, we assign emission parameters 
$(\epsilon_{s j}\ge0)$, $s\in [M]$, to each state $j$.  The emission 
probabilities are
\begin{equation} \label{bsj}
 b_{s j}(t) = 
 \left\{
  \begin{array} {ll}
   g_j(t)\epsilon_{s j}                  &\text{if } s\neq0; \\
   1-g_j(t)\sum_{s\in [M]}\epsilon_{s j} &\text{if } s=0,
  \end{array}
 \right.
\end{equation}
subject to the constraints that for all $j\in [N]$ and all $t\in[T]$,
\begin{equation} \label{maxconstrepst}
 g_j(t)\sum_{s\in [M]}\epsilon_{s j}\leq1.                                                              
\end{equation}

Denote the initial distribution over states by $\pi_j = \Pr(X_1 = j)$, 
subject to the constraints $\pi_{j} \geq 0$ and
\begin{equation} \label{maxconstrpi}
 \sum_{j\in [N]} \pi_j = 1.
\end{equation} 

Were this a typical hidden Markov model, we could estimate its 
parameters using the Baum-Welch algorithm~\cite{BaumEagon,Welch}.  
Since it is not, we develop a novel Expectation Maximization (EM) 
algorithm~\cite{DLR} to estimate the parameters 
$\Theta = \big((\pi_j),(\tau_{ij}),(\epsilon_{s j})\big)$, given $y$, 
$f_j(t)$ and $g_{j}(t)$, as follows.

\section{Expectation Maximization} \label{sec:em}

The expectation maximization algorithm maximizes, at each iterative 
step, the (expected) log-likelihood function described below.  Let 
$\mathcal{X}$ be the set of all possible time series of states and let 
${\hat\Theta}^k$ be the estimate of $\Theta$ at the $k$-th iteration 
of the algorithm. 
\begin{equation} \label{thetatuple}
 {\hat\Theta}^k 
  = \big((\hat\pi^k_j),(\hat\tau^k_{ij}),(\hat\epsilon^k_{sj})\big).
\end{equation}
The algorithm begins by initializing the parameter estimates in the 
first ($k=1$) iteration.  Then the $k+1^{\rm st}$ iteration consists 
of two steps:
\begin{enumerate}[label=(\roman{*})]
\item \label{itr1} Compute the {\sl expectation value\/} of the 
log-likelihood, using the current ($k^{\rm th}$) estimate for the 
parameters:
\begin{equation} \label{likerat}
 \mathcal{L}(\Theta,{\hat\Theta}^k) 
  = \sum_{x\in\mathcal{X}}\log[\Pr(x,y;\Theta)] 
                          \Pr(x\mid y;{\hat\Theta}^k),
\end{equation}
where $\Pr(\cdot;\Theta)$ means $\Pr(\cdot)$ in a probability 
distribution parametrized by $\Theta$.
\item \label{itr2} Find the parameters that {\sl maximize\/} the
expected log-likelihood: 
\begin{equation*} 
 {\hat\Theta}^{k+1}
  = \underset{\Theta}{\argmax} \:\mathcal{L}(\Theta,{\hat\Theta}^k),
\end{equation*}
subject to constraints in the inequalities~\eqref{maxconstrtaut},
\eqref{maxconstrepst} and~\eqref{maxconstrpi}.
\end{enumerate}
As in the regular Baum-Welch algorithm, we express our computations in 
terms of certain conditional probabilities based on the parameters 
estimated at the $k^{\rm th}$ iteration, 
\begin{equation} \label{gammaxi}
\begin{aligned} 
  \gamma^k_j(t) &= \Pr(X_t = j\mid y;{\hat\Theta}^k),               \\
 \xi^k_{i j}(t) &= \Pr(X_t = j, X_{t+1} = i\mid y;{\hat\Theta}^k).  \\            
\end{aligned}
\end{equation}
Some reindexing of eq.~\eqref{likerat} yields the following expression 
for $\mathcal{L}(\Theta,{\hat\Theta}^k)$ in terms of these 
probabilities:
\begin{equation} \label{Lexp}
 \mathcal{L}(\Theta,{\hat \Theta}^k) 
  = \sum_{j\in[N]}\log(\pi_j)\gamma^k_j(1) 
    +\sum_{i,j\in[N]}\sum_{t=1}^{T-1}
     \log\bigl(a_{ij}(t)\bigr)\xi^k_{i j}(t) 
    +\sum_{j\in[N]}\sum_{t=1}^{T}
     \log\bigl(b_{y_t j}(t)\bigr)\gamma^k_j(t).
\end{equation} 
We iterate steps~\ref{itr1} and~\ref{itr2}, for which we compute 
$\gamma^k_j(t)$ and $\xi^k_{i j}(t)$ in eqs.~\eqref{gammaxi}, and 
$\mathcal{L}(\Theta,{\hat\Theta}^k)$ in eq.~\eqref{Lexp} above.  We 
continue until some standard of convergence is achieved.  We then 
output the final ${\hat\Theta}^k$ as our estimate of $\Theta$. 

\begin{thm} \label{mainthm}
There is a constrained expectation maximization algorithm giving a 
sequence of estimates $\hat\Theta^k$ that converges to a critical 
point of the likelihood function, which is the maximum likelihood 
estimate $\hat\Theta$ for the observed sequence $y$ when the initial 
guess is sufficiently close.  Further, to achieve  a precision 
$\epsilon$ in the estimates, the time complexity of the algorithm is  
$O\bigl((N^2 + M)T\log( T/\epsilon )\bigr)$.  In particular, 
for a fixed precision $\epsilon$, the time complexity is 
$O\bigl((N^2 + M)T\log T\bigr)$.
\end{thm}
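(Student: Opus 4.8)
The plan is to turn the procedure sketched above into a bona fide EM algorithm and then to analyze its M-step, its convergence, and its running time in turn. The E-step is essentially routine: the recursions producing $\gamma^k_j(t)$ and $\xi^k_{ij}(t)$ in~\eqref{gammaxi} are exactly the Baum--Welch forward--backward recursions, the only change being that the stationary matrices are replaced by the time-dependent $A(t)=(a_{ij}(t))$ and $B(t)=(b_{sj}(t))$ assembled from $\hat\Theta^k$ and the given $f_j,g_j$ through~\eqref{aij} and~\eqref{bsj}; with the usual per-step rescaling to prevent underflow this costs $O((N^2+M)T)$ per iteration. The real content is therefore the M-step, i.e.\ maximizing~\eqref{Lexp} over the constraint set cut out by~\eqref{maxconstrtaut},~\eqref{maxconstrepst} and~\eqref{maxconstrpi}.

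First I would note that~\eqref{Lexp} separates: $(\pi_j)$ occurs only in the first sum, $(\tau_{ij})$ only in the second---where the terms for distinct source states $j$ are moreover independent---and $(\epsilon_{sj})$ only in the third, again decoupling over $j$. The $\pi$-subproblem is the classical maximization of $\sum_j\gamma^k_j(1)\log\pi_j$ over the simplex~\eqref{maxconstrpi}, solved by a single Lagrange multiplier to give $\hat\pi^{k+1}_j=\gamma^k_j(1)$. For a fixed source state $j$, after discarding terms independent of $\tau$, the $\tau$-subproblem is to maximize
\[
 \Phi_j(\tau)=\sum_{i\neq j}c_i\log\tau_{ij}
   +\sum_{t=1}^{T-1}d_t\log\bigl(1-f_j(t)\sum\nolimits_{i\neq j}\tau_{ij}\bigr),
 \qquad c_i=\sum_t\xi^k_{ij}(t),\quad d_t=\xi^k_{jj}(t),
\]
over $\tau_{ij}\ge 0$ subject to $\sum_{i\neq j}\tau_{ij}\le 1/\max_t f_j(t)$, which is precisely~\eqref{maxconstrtaut} and which also forces each $a_{ij}(t)\in[0,1]$. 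The decisive observation is that $\Phi_j$ depends on $(\tau_{ij})_i$ only through the individual $\tau_{ij}$ in the first sum and through the single quantity $S=\sum_{i\neq j}\tau_{ij}$ in the second; partitioning the maximization by the value of $S$, the inner problem $\max\{\sum_i c_i\log\tau_{ij}:\sum_i\tau_{ij}=S\}$ is solved by the proportional allocation $\tau_{ij}=Sc_i/C$ with $C=\sum_i c_i$, and equals $C\log S$ up to an additive constant. Thus the M-step for $\tau$ reduces to the one-dimensional, strictly concave maximization of $\psi_j(S)=C\log S+\sum_t d_t\log(1-f_j(t)S)$ on $[0,1/\max_t f_j(t)]$; generically $C>0$ and $\psi_j\to-\infty$ at both endpoints, so the maximizer is the unique interior zero of the strictly decreasing function $\psi_j'(S)=C/S-\sum_t d_t f_j(t)/(1-f_j(t)S)$, which I would locate by bisection (or a safeguarded Newton iteration). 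Each evaluation costs $O(T)$, and---under the mild assumption that the peak activity levels $\max_t f_j(t),\max_t g_j(t)$ are bounded below by an inverse polynomial in $T$, as they are in the intended applications---the bracketing interval has length polynomial in $T$, so $O(\log(T/\epsilon))$ evaluations give precision $\epsilon$. The $\epsilon$-subproblem for each $j$ has the same form, with $c_i$ replaced by $\tilde c_s=\sum_{t:\,y_t=s}\gamma^k_j(t)$, $d_t$ by $\gamma^k_j(t)\mathbf 1_{\{y_t=0\}}$, and $f_j$ by $g_j$, and is handled identically. This exhibits a well-defined M-step returning the unique constrained maximizer of~\eqref{Lexp}.

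Convergence then follows the standard EM template. Because $\hat\Theta^{k+1}$ maximizes $\mathcal L(\Theta,\hat\Theta^k)$, the observed-data log-likelihood $\log\Pr(y;\Theta)$ is non-decreasing along $(\hat\Theta^k)$ and, being bounded above, converges; the usual argument (cf.~\cite{DLR}) then identifies every limit point of $(\hat\Theta^k)$ as a stationary point of the likelihood. For the final assertion I would linearize the EM map $\hat\Theta\mapsto\hat\Theta'$ at the (isolated, regular) maximum likelihood estimate $\hat\Theta$: its derivative there has spectral radius strictly below $1$ (the missing-information fraction) under the model's identifiability, so the map is a contraction on a neighborhood of $\hat\Theta$, and by the Banach fixed-point theorem the iterates converge to $\hat\Theta$, at a linear rate, whenever $\hat\Theta^1$ is close enough.

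For the running time, each iteration costs $O((N^2+M)T)$ for the forward--backward pass and for assembling the statistics $c_i,d_t$ and their emission analogues, plus the cost of the $N$ transition and $N$ emission one-dimensional root-finds; warm-starting each root-finder from its value at the previous iteration (consecutive iterates being eventually geometrically close) keeps the total root-finding cost over the whole run at $O(NT\log(T/\epsilon))$ rather than paying a full $\log(T/\epsilon)$ factor each iteration, and this is absorbed into the E-step term since $N\le N^2+M$. Since $O(\log(T/\epsilon))$ iterations bring the linearly convergent sequence within $\epsilon$ of $\hat\Theta$, the total is $O\bigl((N^2+M)T\log(T/\epsilon)\bigr)$, which for fixed $\epsilon$ is $O\bigl((N^2+M)T\log T\bigr)$. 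The hard part, I expect, is not any single step but the interplay of two of them: first, checking that the reduced one-dimensional problems are genuinely solved by the M-step in every case, including the degenerate ones ($C=0$, forcing $\tau_{\cdot j}\equiv 0$; or the constraint~\eqref{maxconstrtaut} active, which can happen only when $\xi^k_{jj}$ vanishes at the peak of $f_j$), so that the update is everywhere well-defined; and second, propagating the inexact, bisection-based M-steps through the EM convergence analysis while tracking how tight each M-step must be so that the error accumulated over $O(\log(T/\epsilon))$ iterations is still $O(\epsilon)$. It is this second point---together with the polynomial-in-$T$ size of the bracketing interval---that turns the naive $\log(1/\epsilon)$ into $\log(T/\epsilon)$.
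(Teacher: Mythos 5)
Your proposal is correct and follows the same core strategy as the paper: separate the M-step by parameter blocks, observe that stationarity forces the proportional allocation $\tau_{ij}\propto\sum_t\xi^k_{ij}(t)$, reduce to a one-dimensional strictly monotone root-finding problem solved by bisection, fall back to a rescaled Baum--Welch update on the boundary in the degenerate cases, and invoke general EM theory for convergence. Three divergences are worth noting. First, your derivation of the proportional allocation (partitioning the maximization by the value of $S=\sum_{i\neq j}\tau_{ij}$ and solving the inner problem in closed form) is cleaner than the paper's, which merely observes that the right side of the stationarity equation \eqref{sumeq} is independent of $i$ and defines $\tau_j=\tau_{ij}/\Xi^k_{ij}$. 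Second --- and this is the one place where your argument is strictly weaker --- you bisect in $S$ on $[0,1/\max_t f_j(t)]$ and therefore need the extra hypothesis that the peak activity levels are bounded below by an inverse polynomial in $T$; the paper instead changes variables to $u=1/\tau_j$ and uses the unconditional bound $M^k_j\le T-1$ to produce an explicit bracket $u^L\le u^c\le 2(T-1)f^{k*}_j$ of length $O(T)$, so no assumption on the activity functions is required and the $\log(T/\epsilon)$ step count is unconditional. Third, your complexity and convergence accounting is more thorough than the paper's: the paper counts only the cost of a single EM iteration (treating $\epsilon$ as the root-finding precision) and asserts convergence by citing the EM framework, whereas you supply the local-contraction argument for the EM map at the MLE, count $O(\log(T/\epsilon))$ outer iterations, and use warm starts to keep the total within the claimed bound; this extra work is arguably needed to justify the theorem as literally stated, so on this point your route is the more complete one.
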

\begin{proof}
Suppose we have the estimates  $\hat\Theta^{k}$ defined in eq.~\eqref{thetatuple} 
from step $k$ of the algorithm.   We proceed to compute 
$\gamma^{k+1}_j(t)$ and $\xi^{k+1}_{ij}(t)$ in eqs.~\eqref{gammaxi} to 
begin the next iteration.  Just as in the regular Baum-Welch 
algorithm, we apply dynamic programming.  Denote the $k^{\rm th}$ 
estimate of the transition matrix by 
$\hat A^k(t) = \bigl(\hat a^k_{ij}(t)\bigr)$, where 
\begin{equation} \label{hataij}
 \hat a^k_{ij}(t)
 =  
 \left\{\begin{array} {ll}
        f_j(t) \hat\tau^k_{ij}                   &\text{if } i\neq j; \\
        1 -  f_j(t)\sum_{i\neq j}\hat\tau^k_{ij} &\text{if } i=j.
        \end{array}
 \right .
\end{equation}   
Similarly,
\begin{equation} \label{hatbsj}
 \hat b^k_{sj}(t) 
 =  
 \left\{\begin{array} {ll}
        g_j(t) \hat\epsilon^k_{sj}                    &\text{if } s\neq0; \\
        1 - g_j(t)\sum_{s\in [M]} \hat\epsilon^k_{sj} &\text{if } s=0.
        \end{array}
 \right .
\end{equation}  
It is  convenient to define $\hat B^k(t)$ to be the diagonal matrix with
$jj^{\rm th}$ entry $\hat b^k_{y_t j}(t)$.
Now compute two sequences of (co)vectors, $\alpha^k(t)\in\mathbb{R}^N$ 
and $\beta^k(t)\in(\mathbb{R}^N)^{\dagger}$, recursively, as follows:
\renewcommand{\arraystretch}{1.5}
\begin{align*}
 \alpha^k(1) 
 &=  
 \hat B^k(1)\hat\pi^k;\\
 \alpha^k(t) 
 &=
 \hat B^k(t)\hat A^k(t-1)\alpha^k(t-1);\\
 \beta^k(T) 
 &=
 {\bf 1}^\mathsf{T} \hat B^k(T); \\
 \beta^k(t) 
 &=
 \beta^k(t+1)\hat A^k(t)\hat B^k(t).
\end{align*}
Then\footnote{With a slight  notational  abuse, $\hat B^k(t)^{-1}$ denotes the diagonal matrix whose
$jj^{\rm th}$ entry is $1/\hat b^k_{y_t j}(t)$ if $\hat b^k_{y_t j}(t) \neq 0$ and $1$ otherwise.}
\begin{align*}
 \gamma^{k+1}_j(t) 
 &=
 {\beta^k_j(t)\hat b^k_{y_t j}(t)^{-1}\alpha^k_j(t)    
 \over
 \beta^k(t)\hat B^k(t)^{-1}\alpha^k(t)
};                                                                 \\
\xi^{k+1}_{i j}(t)
 &=
{\beta^k_i(t+1)\hat a^k_{i j}(t)\alpha^k_j(t)
 \over
 \beta^k(t+1)\hat A^k(t)\alpha^k(t)
}.                                                                 \\
\end{align*}

\smallskip
The estimates for the initial probabilities $\pi_j$ are the same as in 
the normal Baum-Welch algorithm, as is clear from the expression for 
$\mathcal{L}(\Theta,{\hat\Theta}^k)$ in eq.~\eqref{Lexp}.  Thus 
\begin{equation*}
 \hat \pi^{k+1}_j= \gamma^{k+1}_j(1).
\end{equation*} 
 
Now notice that all the constraints in~\eqref{maxconstrtaut} 
and~\eqref{maxconstrepst}  necessary to define $\mathcal{L}(\Theta,{\hat\Theta}^k)$ are implied by the strongest constraints:
for all $j\in [N]$,
\begin{equation} \label{maxconstrtau}
 f^*_{j}\sum_{r\neq j}\tau_{rj} \leq 1,                                                               
\end{equation}
where $f^*_{j} = \max_{t\in [T-1]} \: f_{j}(t)$, and
\begin{equation} \label{maxconstreps}
 g^*_{j}   \sum_{s\in [M]} \epsilon_{s j} \leq 1,                                                              
\end{equation}
where  $g^*_{j} = \max_{t\in [T]} \: g_{j}(t)$.

Consider the computation of $\hat\tau^{k+1}_{ij}$, for 
$i\neq j\in [N]$.  It should lie in the domain 
$F_j\subset\mathbb{R}^N$ defined by  
constraints~\eqref{maxconstrtau} and the non-negativity of the 
parameters.  Since these constraints are independent for different 
$j$s, we can consider each $j$ separately, and find the optimal 
parameters $\tau_{ij}$ by computing the critical points of 
$\mathcal{L}(\Theta,{\hat\Theta}^k)$ relative to $(\tau_{ij})\in F_j$.
Using eqs.~\eqref{Lexp} and~\eqref{aij}, 
\begin{equation} \label{partialtau}
 \frac{\partial\mathcal{L}(\Theta,{\hat\Theta}^k)}
      {\partial \tau_{ij}} 
 =  
 \frac{1}{\tau_{ij}}
 \sum_{t=1}^{T-1}\xi^k_{ij}(t)
 - \sum_{t=1}^{T-1}\frac{f_j(t)\xi^k_{jj}(t)}
                        {1-f_j(t)\sum_{r\neq j}\tau_{rj}} 
\end{equation}
If the left sum in eq.~\eqref{partialtau}, 
$\sum_{t=1}^{T-1}\xi^k_{ij}(t) = 0$, the derivative is nonpositive, 
so $\mathcal{L}(\Theta,{\hat\Theta}^k)$ is weakly decreasing and 
$\tau_{ij} = 0$ gives its largest value.  If the right sum in 
eq.~\eqref{partialtau}, 
$\sum_{t=1}^{T-1}f_j(t)\xi^k_{jj}(t)/
                 (1-f_j(t)\sum_{r\neq j}\tau_{rj}) = 0$, the 
derivative is nonnegative, so $\mathcal{L}(\Theta,{\hat\Theta}^k)$ is
weakly increasing and takes its maximum value when $\tau_{ij}$ is as
large as possible, {\it i.e.}, when it saturates 
constraint~\eqref{maxconstrtau}.  We will show how to handle this
situation after discussing the generic case which we do next. 

Assuming then that neither sum in eq.~\eqref{partialtau} is $0$, to 
find the stationary points of $\mathcal{L}(\Theta,{\hat\Theta}^k)$ we 
set eq.~\eqref{partialtau} to $0$ and solve for $\tau_{ij}$.  
Specifically, $\tau_{ij}$ must satisfy
\begin{equation} \label{sumeq}
 \frac{1}{\tau_{ij}}\sum_{t=1}^{T-1}\xi^k_{i j}(t) 
 = 
 \sum_{t=1}^{T-1}\frac{f_j(t)\xi^k_{jj}(t)}
                      {1-f_j(t)\sum_{r\neq j}\tau_{rj}}.
\end{equation}
We note that if $f_j(t)\equiv1$, which makes the transition 
probabilities, $a_{ij}$, time independent, then the solution 
to eq.~\eqref{sumeq} is the familiar Baum-Welch solution: 
$\hat a^k_{ij} = \hat \tau^k_{ij}
= \sum_{t=1}^{T-1}\xi^k_{ij}(t)/
  \sum_{t=1}^{T-1}\gamma_j(t)$ for all $i,j\in[N]$.  For non-constant 
activity functions, however, the solution is more complicated.

Since the right side of eq.~\eqref{sumeq} is manifestly independent of 
$i$, the left side must be, too.  Let
\begin{equation} \label{tautilde}
 \tau_{j} = \frac{\tau_{ij}}{\sum_{t=1}^{T-1}\xi^k_{ij}(t)} 
          = \frac{\tau_{ij}}{\Xi^k_{ij}},
\end{equation}
where the last expression uses the antiderivative convention that for 
a function of $t$ denoted by a letter in lower case, the corresponding  
upper case letter\footnote{$\Xi$ is 
upper case $\xi$; $\Lambda$ is upper case $\lambda$; $M$ is upper case 
$\mu$; $N$ is upper case $\nu$; $\Gamma$ is upper case $\gamma$.} 
represents its sum over its domain of definition 
($t=1$ to $T-1$ in this case).  Now
\begin{equation*} 
 \sum_{r\neq j}\tau_{rj} 
 =
 \tau_{j}\sum_{r\neq j}\Xi^k_{rj} 
 =
 \tau_{j}\sum_{t=1}^{T-1}\sum_{r\neq j}\xi^k_{rj}(t).
\end{equation*}
If we denote the probability of {\sl moving\/} away from state $j$ by
\begin{equation*} 
 \mu^k_j(t) = \sum_{r\neq j}\xi^k_{rj}(t),
\end{equation*}
we can write $\sum_{r\neq j}\tau_{rj} = \tau_j M^k_j$.  Substituting 
in eq.~\eqref{sumeq}, this gives:
\begin{equation*} 
 \frac{1}{\tau_{j}} 
 =  
 \sum_{t=1}^{T-1}\frac{f_j(t)\xi^k_{jj}(t)}
                      {1 - f_j(t)M^k_j\tau_{j}},
\end{equation*}
or equivalently:
\begin{equation} \label{sumeqntau}
 1 = \sum_{t=1}^{T-1}\frac{f_j(t)\xi^k_{jj}(t)}
                          {1/\tau_j - f_j(t)M^k_j}.
\end{equation}
We must solve this equation for $\tau_j$, whence we can use 
eq.~\eqref{tautilde} to solve for each of the $\tau_{ij}$.  Since 
$\tau_j$ is nonnegative and constraint~\eqref{maxconstrtau} must 
hold, we recast these conditions in terms of $\tau_j$ as:
\begin{equation*} 
 f^*_j M^k_j < \frac{M_j^k}{\sum_{r\neq j}\tau_{rj}} 
             = \frac{1}{\tau_j} 
             < \infty.
\end{equation*}
Let
\begin{equation*}  
 f^{k *}_{j} 
 = 
 \max_{t\in[T-1]\,\mid\,\xi^k_{jj}(t)\neq0} \: f_{j}(t)\,\,\leq f^*_{j}.
\end{equation*}
Each of the terms in the sum on the right side of 
eq.~\eqref{sumeqntau} is strictly decreasing in $1/\tau_j$ when it is
well-defined ($1/\tau_j > f^{k *}_j M^k_j$).  Values of $1/\tau_j$ 
just larger than $f^{k *}_j M^k_j$ make the sum arbitrarily large, 
and as $1/\tau_j$ increases from that value, the sum decreases 
monotonically to 0, so exactly one value of 
$\tau_j < 1/(f^{k *}_j M^k_j)$ will satisfy eq.~\eqref{sumeqntau}.  We 
can solve the equation numerically to find this value, call it 
$\tau^c_j$.  If $\tau^c_j < 1/(f^*_j M^k_j)$, splitting it 
proportionally to $\Xi_{ij}$ according to eq.~\eqref{tautilde} gives 
the {\sl unique\/} critical point $(\tau_{ij})\in F_j$ of 
$\mathcal{L}(\Theta,{\hat\Theta}^k)$.
 
We can compute explicitly the Hessian of 
$\mathcal{L}(\Theta,{\hat \Theta}^k)$ with respect to the $\tau_{ij}$; 
its components are:
\begin{equation*} 
 \frac{\partial^2\mathcal{L}(\Theta,{\hat\Theta}^k)}
      {\partial\tau_{ij}\partial\tau_{i'j}} 
 = 
 -\frac{\Xi^k_{ij}}{\tau^2_{ij}} \delta_{i i'}
 -\sum_{t=1}^{T-1}\frac{f^2_j(t)\xi^k_{jj}(t)}
                             {(1-f_j(t)\sum_{r\neq j}\tau_{rj})^2}.
\end{equation*}
Thus, as a matrix the Hessian can be written as the sum of two 
matrices:
\begin{equation*}
 \left(\frac{\partial^2\mathcal{L}(\Theta,{\hat\Theta}^k)}
             {\partial\tau_{ij}\partial\tau_{i'j}} 
 \right)
 =
 -\begin{pmatrix}
   \Xi^k_{1j}/\tau^2_{1j} \\
   &\ddots\\
   &&\Xi^k_{Nj}/\tau^2_{Nj} 
  \end{pmatrix}
 -\sum_{t=1}^{T-1}\frac{f^2_j(t)\xi^k_{jj}(t)}
                       {(1-f_j(t)\sum_{r\neq j}\tau_{rj})^2}\,
 {\bf 1}{\bf 1}^\mathsf{T},
\end{equation*}
where ${\bf 1}\in\mathbb{R}^N$ is the vector of all $1$s.  Each of the 
matrices on the right is negative semi-definite, so the Hessian is 
also.  Thus the (unique) critical point we found in this case is a 
global maximum of $\mathcal{L}(\Theta,{\hat \Theta}^k)$ in $F_j$ and 
hence the choice for $\hat\tau_{ij}^{k+1}$.

If the solution does not satisfy the original 
constraint~\eqref{maxconstrtau}, {\it i.e.}, 
$\tau^c_j \ge  1/(f^*_j M^k_j)$, or if the right side of 
eq.~\eqref{sumeq} is $0$, the maximum will be on the boundary of 
$F_j$.  Thus we maximize $\mathcal{L}(\Theta,{\hat \Theta}^k)$ subject 
to the boundary constraint
\begin{equation*}
 \sum_{j\neq i\in[N]} \tau_{ij} = \frac{1}{f^*_j}.                                                                
\end{equation*}

This is in the form of the constraint in the regular Baum-Welch 
algorithm with 1 replaced by $1/f^*_j$ and the self-transition 
probability set to 0.  Thus the critical $\tau_{ij}$ can be computed 
as in the Baum-Welch algorithm,\footnote{In our notation, the usual 
Baum-Welch estimate is $\hat a^k_{ij} = \hat \tau^k_{ij} 
= \Xi^k_{ij}/\Gamma^k_j$ for all $i, j$.} with $\Gamma^k_j$ replaced 
by $M^k_j$ and the solution divided by $f^*_j$:
\begin{equation*} 
 \tau^c_{ij} = \frac{\Xi^k_{ij}}{f^*_j  M^k_j}.
\end{equation*}
This is the unique critical point in this case, and the global maximum 
of $\mathcal{L}(\Theta,{\hat \Theta}^k)$ in $F_j$, by the same 
argument as in the Baum-Welch algorithm. This becomes the choice for 
$\hat\tau_{ij}^{k+1}$.

\smallskip
We  turn to the computation of $\hat\epsilon_{sj}^{k+1}$, 
$s\in[M]$.  As before, we begin by finding the stationary points of 
$\mathcal{L}(\Theta,{\hat\Theta}^k)$, now relative to 
$(\epsilon_{sj})\in G_j\subset\mathbb{R}^{M}$ defined 
by constraints~\eqref{maxconstreps} and the non-negativity of these parameters.  
Using eqs.~\eqref{Lexp} and~\eqref{bsj} gives:
\begin{equation} \label{partialeps}
 \frac{\partial\mathcal{L}(\Theta,{\hat \Theta}^k)}
      {\partial \epsilon_{s j}} 
 = 
 \frac{1}{\epsilon_{s j}} 
 \sum_{t=1}^{T} \gamma^k_j(t)\delta_{s,y_t}
       -\sum_{t=1}^{T} \frac{g_{j}(t)\gamma^k_j(t)}
             {1-g_{j}(t)\sum_{l\in[M]}\epsilon_{lj}}\delta_{0,y_t}.
\end{equation}
As we did for eq.~\eqref{partialtau}, we must consider the situations
when either of the sums in eq.~\eqref{partialeps} vanishes.  When the 
left sum is $0$, a extreme value is given by $\epsilon_{sj}=0$, and 
when the right sum is $0$, constraint~\eqref{maxconstreps} is 
saturated.  Assuming neither of the sums vanishes, we  find the 
stationary points by solving
\begin{equation} \label{zeroeqneps}
 \frac{1}
      {\epsilon_{s j}} \sum_{t=1}^{T} \gamma^k_j(t)\delta_{s,y_t}
 = 
 \sum_{t=1}^{T} \frac{g_{j}(t)\gamma^k_j(t)}
                     {1-g_{j}(t)\sum_{l\in[M]}\epsilon_{lj}}\delta_{0,y_t}.
\end{equation}
Solution to the {\sl emission\/} equations, eqs.~\eqref{zeroeqneps},
follows using the same steps as for the {\sl transition\/} 
equations, eqs.~\eqref{sumeq}.  We first denote the probability of 
emission $s\in[M]$ from state $j$ by:\footnote{In a simplified model 
for the mobile phone data we discussed in the Introduction, $M=N$ and 
every state $j$ emits either the signal $j$ or $0$; in other words, 
$\epsilon_{sj} = 0$ if $s\ne j\in[M]$.  This simplifies the following
expressions:  For $t$ such that $y_t = j$, $\gamma^k_j(t) = 1$ (the 
state is $j$ with certainty if the observed emission is $j$), 
{\it i.e.},  $\lambda^k_{jj}(t) = \delta_{j,y_t}$, and 
$\lambda^k_{sj}(t) = 0$ if $s\ne j\in[M]$.}
\begin{equation*} 
 \lambda^k_{sj}(t) = \gamma^k_j(t) \delta_{s,y_t},
\end{equation*} 
in terms of which we rewrite eq.~\eqref{zeroeqneps} as
\begin{equation} \label{firsteqneps}
\frac{1}{\epsilon_{s j}} \sum_{t=1}^{T} 
  \lambda^k_{sj}(t)
 =  
 \sum_{t=1}^{T}\frac{g_j(t)\lambda^k_{0j}(t)}
                      {1-g_{j}(t)\sum_{l\in[M]}\epsilon_{lj}}.
\end{equation}
We define (independent of $s$)
\begin{equation} \label{epstilde}
 \epsilon_{j} = \frac{\epsilon_{sj}}{\Lambda^k_{sj}}.
\end{equation}
We also define the probability of any {\sl non-zero\/} emission from 
state $j$,
\begin{equation*} 
 \nu^k_j(t) = \sum_{l\in [M]}\lambda^k_{lj}(t),
\end{equation*} 
which, used in eq.~\eqref{firsteqneps}, gives
\begin{equation} \label{sumeqneps}
 1 = 
 \sum_{t=1}^{T}\frac{g_j(t)\lambda^k_{0j}(t)}
                      {1/\epsilon_j - g_j(t) N^k_j}.
\end{equation}
Let  
\begin{equation*}  
 g^{k *}_{j} 
 = 
 \max_{t\in [T]\,\mid\,\lambda^k_{0j}(t)\neq0}\:g_{j}(t)\,\,\leq g^*_{j}.
\end{equation*}
As before, there is exactly one solution 
$\epsilon^c_j < 1/(g^{k *}_j N^k_j)$ to eq.~\eqref{sumeqneps}.  We can 
find it numerically, and if $\epsilon^c_j < 1/(g^*_j N^k_j)$, we use 
eq.~\eqref{epstilde} to find all the $\epsilon_{sj}$, which will then
satisfy constraint~\eqref{maxconstreps}.  Again we can compute the Hessian 
explicitly to confirm that this is a global maximum of 
$\mathcal{L}(\Theta,{\hat \Theta}^k)$, now in $G_j$, and hence the 
choice for $\hat\epsilon_{s j}^{k+1}$.

If $\epsilon^c_j \ge 1/(g^*_j N^k_j)$, or if the right side of 
eq.~\eqref{firsteqneps} is $0$, we must find instead the critical 
point on the boundary of $G_j$:
\begin{equation*}
 \sum_{s\in [M]} \epsilon_{sj} = \frac{1}{g^*_j}.                                                                
\end{equation*}
Again, this is in the form of the constraint in the regular Baum-Welch 
algorithm.  Accordingly, we set the critical $ \epsilon_{sj}$ to the 
Baum-Welch estimate, rescaled by $g^*_j$:
\begin{equation*} 
 \epsilon^c_{sj} = \frac{\Lambda^k_{s j}}{g^*_j N^k_j}.
\end{equation*}
This is the unique critical point and the global maximum of 
$\mathcal{L}(\Theta,{\hat \Theta}^k)$ in $G_j$, and therefore the 
choice for $\hat\epsilon_{sj}^{k+1}$ in this case.

\smallskip
We have shown how to find ${\hat\Theta}^{k+1}$ maximizing 
$\mathcal{L}(\Theta,{\hat\Theta}^k)$ in eq.~\eqref{Lexp}.  This 
algorithm converges as claimed because it is an instance of 
expectation maximization.  To understand its time complexity we must 
consider the numerical solution of eqs.~\eqref{sumeqntau} 
and~\eqref{sumeqneps}.  To simplify notation we rewrite 
eq.~\eqref{sumeqntau} in terms of $u = 1/\tau_j$, and a function 
$w(u)$,
\begin{equation} \label{sumeqneta}
w(u) = \sum_{t=1}^{T-1}\frac{f_j(t)\xi^k_{jj}(t)}
                            {u  - f_j(t)M^k_j} 
       -1,
\end{equation}
as $w(u) = 0$.  As an initial estimate for the root, $u^c$, we can 
use $u^L > f^{k *}_j M^k_j$ such that 
\begin{equation} \label{initest}
 \frac{f^{k *}_j \xi^k_{jj}(t^{k *}_j)}{u^L - f^{k *}_jM^k_j} 
 = 1,
\end{equation}
where $t^{k *}_j \in [T-1]$ satisfies $f_j(t^{k *}_j) = f^{k *}_j$ and 
$\xi^k_{jj}(t^{k *}_j)\neq0$.  $u^L \le u^c$ since we found it using 
only one of the nonnegative terms in the sum in eq.~\eqref{sumeqneta}.

Now recall that $f_j(t)\xi^k_{jj}(t) \le f^{k *}_j$ for $t \in [T-1]$, 
and 
\begin{equation*}                                                      
 0 \le M^k_j 
 = 
 \sum_{t\in [T-1]} \mu^k_j(t) 
 = 
 \sum_{t\in [T-1]} \sum_{r\neq j}\xi^k_{rj}(t)
 \le 
 \sum_{t\in [T-1]} \gamma^k_{j}(t) 
 \le 
 T-1.                                                
\end{equation*}
Thus each term in the sum in eq.~\eqref{sumeqneta} is no more than 
\begin{equation*} 
\frac{f^{k *}_j}{u - (T-1)f^{k *}_j}.
\end{equation*}
At $u^R = 2(T-1)f^{k *}_j$ this is $1/(T-1)$, so $w(u^R) \le 0$, 
which implies
\begin{equation}\label{etajcbound}
 u^L \le u^c \le u^R = 2(T-1)f^{k *}_j.
\end{equation}

Using Newton's method, once we have an initial estimate ``sufficiently 
close'' to the root of $w(u) = 0$, the time complexity to find it with 
error less than $\epsilon$ is $O\bigl(T\log(1/\epsilon)\bigr)$,
where the $T$ comes from the cost of evaluating $w(u)$ and $w'(u)$ at
each iteration; in practice this is how we would find the root.  Since 
the length of the interval in \eqref{etajcbound} is $O(T)$, however,
the bisection method gets us to precision $\epsilon$ with 
$O\bigl(\log(T/\epsilon)\bigr)$ steps, with total cost 
$O\bigl(T\log (T/\epsilon)\bigr)$; thus this is the total complexity.

We need to solve  eqs.~\eqref{sumeqntau} and~\eqref{sumeqneps} 
$N$ and $M$ times, respectively, at each iteration, which thus adds 
$O\bigl((N + M)T \log (T/\epsilon)\bigr)$ to the $O(N^2T)$ 
complexity of the computations for $\gamma^{k+1}_i$ and 
$\xi^{k+1}_{ij}$.  Thus the time complexity for the whole algorithm is 
$O\bigl((N^2+M) T \log (T/\epsilon)\bigr) $.  
\end{proof}

\section{Numerical Simulations}

To demonstrate the effect of the activity functions we consider a 
simple model with $N = 3$ states and the same number of possible 
emissions ($M=3$).  From any state $j$, we only allow an emission to 
be either its own label $j$ or $0$, {\it i.e.}, $\epsilon_{s j}=0$ for 
$j\neq s\in[M]$, so a non-zero emission uniquely identifies the state 
that emits it.  We choose random transition and emission parameters:
\begin{align*}
 (\epsilon_{jj}) &= (0.770347,0.579213,0.0821789);                  \\
 (\tau_{ij})
 &= \left(\begin{array}{ccc}
          & 0.298244 & 0.0621274 \\
          0.134788 & & 0.3710750 \\
          0.383490 & 0.182008 &  \\
          \end{array}
    \right),
\end{align*}
where the omitted values are the components for which $i = j$. 

We generate sequences of length $T=24\cdot6\cdot7\cdot200$ (we may 
think of this as $200$ weeks, with an observation every $10$ minutes).  
We consider activity functions with variations that may approximate 
observed data, {\it i.e.}, periodic variations with a period of 
$24\cdot6$ (one day).  Specifically, our numerical simulations use the
following three functions:
\begin{enumerate}[label=(\roman{*})]
\item \label{} constant function,
\begin{equation*} \label{likerat1}
 \mathit{1}(t) = 1;
\end{equation*} 
\item \label{} raised cosine,
\begin{equation*} \label{likerat1}
 r_n(t) = \frac{n-\cos\left(2\pi t/(24\cdot6)
                     \right)}
               {n+1};
\end{equation*} 
\item \label{} shifted cosine
\begin{equation*} \label{likerat2}
 c_j(t) 
 = 
 \frac{1}{3} \left[2-\cos\left(\frac{2\pi(t-6j)}
                                    {24\cdot6} 
                        \right)
            \right].
\end{equation*}
\end{enumerate}

We generate a random sequence of states, $x$, and resulting emissions, 
$y$, using the transition and emission parameters above, and a pair of 
activity functions (a list of these pairs is shown in 
Table~\ref{table2}).  

Before computing the sequence of parameter estimates, we need to 
specify how we compute initial estimates to start the iteration.  This 
can only depend on the observed emission sequence $y$, since in any 
real scenario $x$ is unknown.  As a first guess, for this simple 
model, we interpolate the state sequence $x$ as follows:\footnote{For 
more general models, finding initial parameter estimates will be more
complicated, depending on the particulars of the model.} For every 
pair of successive non-zero emissions, there is a segment of zeros (no 
emission) separating them.  We divide each such segment into two 
subsegments:  Let $j\in[M]$ be the emission immediately preceding the 
segment, and $i\in[M]$ be the emission immediately following the 
segment.  The second subsegment starts at the first time step after 
the one where $f_j$ first attains its maximum value on the segment 
({\it i.e.}, a time at which there is the maximum probability of 
hopping from state $j$ to state $i$).  We assign state $j$ to the time 
steps in the first subsegment and the state $i$ to those in the 
second.  If the emission sequence $y$ starts with a segment of zeros, 
then that segment is assigned the value of the first non-zero 
emission; similarly a terminal sequence of zeros is given the value of 
the last non-zero emission.  Denote the interpolated states by 
$z = (z_t)$, $t\in[T]$.  

From $z$, we compute the estimate $(\hat\pi^1_j)$ for the initial 
distribution over the states $(\pi_j)$ by their frequencies of 
occurrence.  For the initial $\tau_{ij}$ estimate, $\hat\tau^1_{ij}$, 
we use the method described in the proof of Theorem~\ref{mainthm} to 
solve eq.~\eqref{partialtau}, using 
$\xi^k_{ij}(t) = \delta_{i,z_{t+1}}\delta_{j,z_t}$, including $i = j$.  
For the initial $\epsilon_{jj}$ estimate, $\hat\epsilon^1_{jj}$, we 
also use the method described in the proof of Theorem~\ref{mainthm} to 
solve eq.~\eqref{partialeps}, using $\gamma_j(t) = \delta_{j,z_t}$.

To understand the performance of the algorithm in 
Theorem~\ref{mainthm}, we need a measure of the error between the 
estimates and the real parameter values.  The {\sl relative entropy\/}
is one measure for a stationary HMM.  In our case we need to account 
for the time variation of the transition and emission probabilities, 
$a_{ij}(t)$ and $b_{sj}(t)$.  Hence we define a modified version of a 
relative entropy error criterion, the {\sl averaged relative entropy}.
\begin{defn}
Let $\mathcal{Q}=(Q_t)$ and $\mathcal{P}=(P_t)$, $t\in[T]$, be two 
finite sequences of discrete probability distributions on a finite 
set $\mathcal{I}$.  The {\sl Averaged Relative Entropy\/} 
($\mathsf{ARE}$) 
of $\mathcal{P}$ with respect to $\mathcal{Q}$ is
\begin{equation*}
 \mathsf{ARE}(\mathcal{P},\mathcal{Q}) 
 = 
 \frac{1}{T}\sum_{t\in[T]}\mathsf{RE}(P_t,Q_t),
\end{equation*}
where the usual relative entropy ($\mathsf{RE}$) is given by
\begin{equation*}
 \mathsf{RE}(P_t,Q_t) 
 = 
 \sum_{i\in\mathcal{I}} P_t(i) \log \frac{P_t(i)}{Q_t(i)}.
\end{equation*}
\end{defn}

Thus the error function that we compute for given $(\tau_{ij})$
and estimate $(\hat \tau^k_{ij})$ is
\begin{equation*}
 \mathcal{E}_{\tau}\big((\tau_{ij}),(\hat\tau^k_{ij})\big)
 =
 \mathsf{ARE}\Big(\big(a_{ij}(t)\big),\big(\hat a^k_{ij}(t)\big)\Big),
\end{equation*}
where $a_{ij}(t)$ and $\hat a^k_{ij}(t)$ are related through $f_j$ to $\tau_{ij}$ 
and $\hat\tau^k_{ij}$  by eq.~\eqref{aij} and 
eq.~\eqref{hataij}, respectively.  Similarly, for $(\epsilon_{s j})$ and estimate $(\hat\epsilon^k_{sj})$, the error function is 
\begin{equation*}
 \mathcal{E}_{\epsilon}\big((\epsilon_{s j}),(\hat\epsilon^k_{sj})\big)
 =
 \mathsf{ARE}\Big(\big(b_{sj}(t)\big),\big(\hat b^k_{sj}(t)\big)\Big),
\end{equation*}
where $b_{sj}(t)$ and $\hat b^k_{sj}(t)$ are related through $g_j$ to 
$\epsilon_{s j}$ and $\hat\epsilon^k_{sj}$  by eq.~\eqref{bsj} 
and eq.~\eqref{hatbsj}, respectively.  (Remember that we are 
considering the simple case in which the emission $y_t = s\in\{0,j\}$ when the state $x_t=j$.)  

The pairs of activity functions $f_j$ and $g_j$ that we simulate 
numerically are described in Table~\ref{table1}, where the column 
indices label these pairs.

\begin{table}   [H]
\begin{center}
\begin{tabu}to\linewidth{|[1.5pt]c|[1.5pt]c|c|c|c|c|c|c|c|[1.5pt]}  %
\tabucline[1.5pt]-
& a & b & c & d & e & f & g & h \\
\tabucline[1.5pt]-
$f_j$& $\mathit{1}$&$\mathit{1}$& $\mathit{1}$& $r_1$& $c_j$ &$r_2$& $r_1$& $c_j$  \\
\hline 
$g_j$&$c_j$&$r_1$&$\mathit{1}$&$r_1$&$c_j$&$\mathit{1}$&$\mathit{1}$&$\mathit{1}$\\ 
\tabucline[1.5pt]-
\end{tabu}
\end{center}
\caption{Functions used in numerical simulations}
 \label{table1}  
\end{table} 

For each set of pairs of activity functions in Table~\ref{table1} we run the 
algorithm for $50$ iterations.  Figures~\ref{figtrans} 
and~\ref{figemis} plot the averaged relative entropy for the parameter
estimates as a function of iteration step.  The labels (a)--(h) to the 
right of each plot appear in the order of the final error values.  We 
do not provide a plot showing convergence of $(\hat\pi^k_j)$ since the 
only noticeable trend is that if they converge to an exact state 
value, it is usually to the initial state of the interpolated sequence 
$z$. 

In each case the error for both the transitions and the emissions 
decreases to small values.  Since the $\mathsf{ARE}$ depends on the 
activity functions as well as on the parameters and their estimates, 
we need to compute a baseline error value for each case.  For the
parameters $(\tau_{ij})$ and a specific choice of $(f_j)$ it is:
\begin{equation*} 
 \mathcal{B}_\tau\big((\tau_{ij}),(f_j)\big)
 =
 \mathsf{E}\Big[\mathsf{ARE}\Big(\big(a_{ij}(t)\big),
                                 \big( a'_{ij}(t)\big)
                            \Big)
           \Big],
\end{equation*}
where $a_{ij}(t)$ and $a'_{ij}(t)$ are related through $f_j$ to $\tau_{ij}$ and 
$\tau'_{ij}$, respectively,  by eq.~\eqref{aij}, and where 
$\mathsf{E}[\cdot]$ denotes the expectation over uniformly random
$(\tau'_{ij})$.  To estimate this expectation value, we compute the 
average $\mathsf{ARE}$ of the parameters with respect to 1000 
independently chosen sets of random parameters (rather than their 
estimates from our algorithm), for each case (a)--(h).  For the 
emission parameters we compute baselines the same way, using 1000 
uniformly random values $(\epsilon'_{sj})$ to estimate
\begin{equation*} 
 \mathcal{B}_\epsilon\big((\epsilon_{s j}),(g_j)\big)
 =
 \mathsf{E}\Big[\mathsf{ARE}\Big(\big(b_{sj}(t)\big),
                                 \big(b'_{sj}(t)\big)
                            \Big)
           \Big],
\end{equation*}
where $b_{sj}(t)$ and $b'_{sj}(t)$ are related through $g_j$ to $\epsilon_{s j}$ 
and $\epsilon'_{sj}$, respectively,  by eq.~\eqref{bsj}, and where 
$\mathsf{E}[\cdot]$ denotes the expectation value over uniformly
random $(\epsilon'_{sj})$.  The baseline averages thus obtained for 
function pairs in Table~\ref{table1} are recorded in 
Table~\ref{table2}, where the row labels indicate the parameters 
being baselined.
\begin{table}  [H]
\begin{center}
\begin{tabu}to\linewidth{|[1.5pt]c|[1.5pt]c|c|c|c|c|c|c|c|[1.5pt]}  
\tabucline[1.5pt]-
& a& b& c & d  & e& f  & g & h \\
\tabucline[1.5pt]-
$\mathcal{B}_\tau\big((\tau_{ij}),(f_j)\big)$&$1.637$& $1.677$&$1.657$& $0.615$&$0.603$& $0.811$&$0.81$ & $0.610$  \\
\hline
$\mathcal{B}_\epsilon\big((\epsilon_{s j}),(g_j)\big)$&$0.603$&$0.578$&$1.563$&$0.592$&$0.584$&$1.466$&$1.539$&$1.534$\\ 
\tabucline[1.5pt]-
\end{tabu}
\end{center}
\caption{Baseline errors for $(\tau_{ij})$ and $(\epsilon_{s j})$ for 
activity function pairs from Table~\ref{table1}.} 
 \label{table2}
\end{table}

We plot these baseline errors as horizontal lines in 
Figures~\ref{figtrans} and \ref{figemis}.  Most of these are too close 
to be distinguishable; indeed they are all $O(1)$, in contrast to the 
estimation errors plots which are almost all smaller by at least an
order of magnitude, and in most cases by 3 or 4, indicating very good
parameter estimates.  Furthermore, the relative quality of the 
estimates can be understood:  Case (c) is the standard HMM, for which
our algorithm reduces to the Baum-Welch 
algorithm~\cite{BaumEagon,Welch}.  Cases (a) and (b) have greater 
errors, which is not surprising since they have non-constant emission 
activity functions, oscillating in value up to 1.  This means that for
each of these cases, non-zero emissions are lower probability events, 
so there is less information in $y$.  Possibly surprising is the fact 
that when the transition activity function is non-constant, cases 
(d)--(h), the errors are {\sl smaller\/} than in the standard HMM 
case.  But this happens because state changing transitions are 
reduced, so that each non-zero emission observed provides more 
information.  And among these cases, those with varying emission 
activity levels have larger errors than those without.

\begin{figure}[H] 
\includegraphics[scale=0.9]{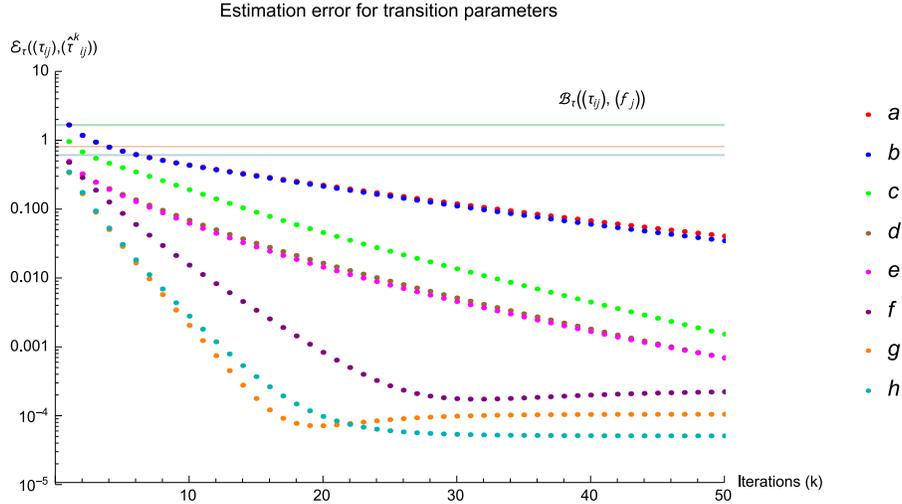}
\caption{Transition 
parameters estimation error,  $\mathcal{E}_{\tau}\big((\tau_{ij}),(\hat\tau^k_{ij})\big)$, for successive iterations $k$.
Labels to the right are of activity function pairs from 
Table~\ref{table1}, and are displayed in the order of the final error 
values. Baseline errors, $\mathcal{B}_\tau\big((\tau_{ij}),(f_j)\big)$, from Table~\ref{table2} are shown as horizontal lines.}
\label{figtrans}
\end{figure}

\begin{figure}[H] 
\includegraphics[scale=0.9]{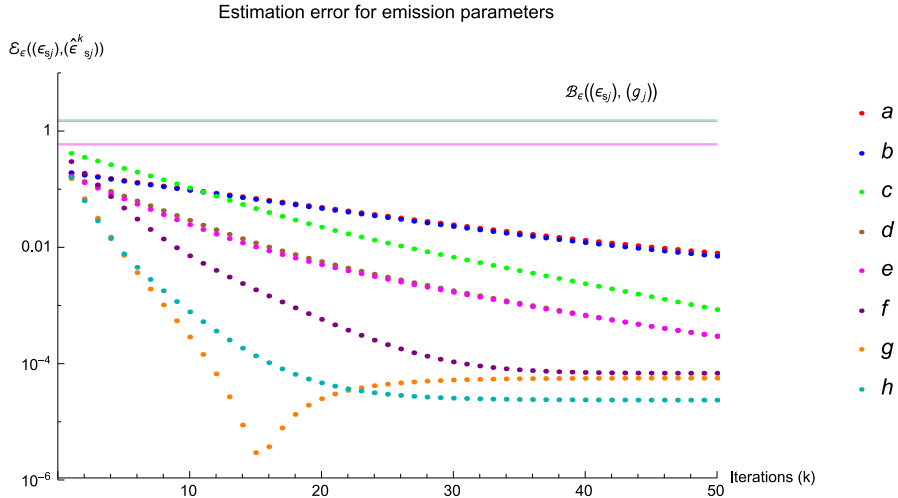}
\caption{Emission parameters estimation error, $\mathcal{E}_{\epsilon}\big((\epsilon_{s j}),(\hat\epsilon^k_{sj})\big)$, for successive iterations $k$.  
Labels to the right are of activity function pairs from 
Table~\ref{table1}, and are displayed in the order of the final error 
values.  Baseline errors,  $\mathcal{B}_\epsilon\big((\epsilon_{s j}),(g_j)\big)$, from Table~\ref{table2} are shown as horizontal lines.}
\label{figemis}
\end{figure}

\setlength{\arrayrulewidth}{1mm}
\setlength{\tabcolsep}{18pt}
\renewcommand{\arraystretch}{1.5}
 
\def\MR#1{\relax\ifhmode\unskip\spacefactor3000 \space\fi
  \href{http://www.ams.org/mathscinet-getitem?mr=#1}{MR#1}}
\begin{bibdiv}
\begin{biblist}

\bib{GHB}{article}{
      author={Gonz\'alez, M. C.},
      author={Hidalgo, C. A.},
      author={Barab\'asi, A.-L.},
       title={Understanding individual human mobility patterns},
        date={2008},
     journal={Nature},
      volume={453},
       pages={779--782},
}

\bib{SQBB}{article}{
      author={Song, C.},
      author={Qu, Z.},
      author={Blum, N.},
      author={Barab\'asi, A.-L.},
       title={Limits of predictability in human mobility},
        date={2010},
     journal={Science},
      volume={327},
       pages={1018--1021},
}

\bib{Csajietal}{article}{
      author={Cs\'aji, B. Cs.},
      author={Browet, A.},
      author={Traag, V. A.},
      author={Delvenne, J.-C.},
      author={Huens, E.},
      author={Van Dooren, P.},
      author={Smoreda, Z.},
      author={Blondel, V. D.},
       title={Exploring the mobility of mobile phone users},
        date={2013},
     journal={Physica A},
      volume={392},
       pages={1459--1473},
}

\bib{Lenormandetal}{article}{
      author={Lenormand, M.},
      author={Louail, T.},
      author={Cantu-Ros, O. G.},
      author={Picornell, M.},
      author={Herranz, R.},
      author={Arias, J. M.},
      author={Barthelemy, M.},
      author={San Miguel, M.},
      author={Ramasco, J. J.},
       title={Influence of sociodemographic characteristics on human mobility},
        date={2014},
     journal={arXiv:1411.7895v1 [physics.soc-ph]},
}

\bib{FLM}{inproceedings}{
      author={Fran\c cois, J.-M.},
      author={Leduc, G.},
      author={Martin, S.},
       title={Learning movement patterns in mobile networks:  A generic method},
   booktitle={European Wireless},
        date={2004},
       pages={128--134},
}

\bib{MRM}{inproceedings}{
      author={Mathew, W.},
      author={Raposo, R.},
      author={Martins, B.},
       title={Predicting future locations with hidden Markov models},
   booktitle={Proceedings of the ACM Conference on Ubiquitous Computing},
        date={2012},
       pages={911--918},
}

\bib{Perkinsetal}{article}{
      author={Perkins, T. A.},
      author={Garcia, A. J.},
      author={Paz-Sold\'an, V. A.},
      author={Stoddard, S. T.},
      author={{Reiner, Jr.,}, R. C.},
      author={Vazquez-Prokopec, G.},
      author={Bisanzio, D.},
      author={Morrison, A. C.},
      author={Halsey, E. S.},
      author={Kochel, T. J.},
      author={Smith, D. L.},
      author={Kitron, U.},
      author={Scott, T. W.},
      author={Tatem, A. J.},
       title={Theory and data for simulating fine-scale human movement in an 
              urban environment},
        date={2014},
     journal={Journal of the Royal Society Interface},
      volume={11},
       pages={20140642, pp.\ 12},
}

\bib{KCPN}{article}{
      author={Kitamura, R.},
      author={Chen, C.},
      author={Pendyala, R. M.},
      author={Narayanan, R.},
       title={Micro-simulation of daily activity-travel patterns for travel 
              demand forecasting},
        date={2000},
     journal={Transportation},
      volume={27},
       pages={25--51},
}

\bib{CAAMG}{inproceedings}{
      author={\c Colak, S.},
      author={Alexander, L. P.},
      author={Alvim, B. G.},
      author={Mehndiretta, S. R.},
      author={Gonz\'alez, M. C.},      
       title={Analyzing cell phone location data for urban travel:
              Current methods, limitations and opportunities},
   booktitle={Transportation Research Board 94th Annual Meeting},
        date={2015},
      number={15-5279},
       pages={17},
}

\bib{DMRRS}{article}{
      author={Douglass, R. W.},
      author={Meyer, D. A.},
      author={Ram, M.},
      author={Rideout, D.},
      author={Song, D.},
       title={High resolution population estimates from telecommunications data},
        date={2015},
     journal={EPJ Data Science},
      volume={4},
       pages={pp.\ 13},
}

\bib{Deng}{article}{
      author={Deng, L.},
       title={A stochastic model of speech incorporating hierarchical 
              nonstationarity},
        date={1993},
     journal={IEEE Transactions on Speech and Audio Processing},
      volume={1},
       pages={471--474},
}

\bib{Ferguson}{inproceedings}{
      author={Ferguson, J. D.},
       title={Variable duration models for speech},
   booktitle={Proceedings of the Symposium on the Application of HMMs to 
              Text and Speech},
        date={1980},
       pages={143--179},
}

\bib{SinKim}{article}{
      author={Sin, B.},
      author={Kim, J. H.},
       title={Nonstationary hidden Markov model},
        date={1995},
     journal={Signal Processing},
      volume={46},
       pages={31--46},
}

\bib{LanchantinPieczynski}{inproceedings}{
      author={Lanchantin, P.},
      author={Pieczynski, W.},
       title={Unsupervised non-stationary image segmentation using triplet 
              Markov chains},
   booktitle={Proceedings of the Advanced Concepts for Intelligent Vision 
              Systems (ACIVS 04)},
        date={2004},
       pages={6},
}

\bib{BaumEagon}{article}{
      author={Baum, L. E.},
      author={Eagon, J. A.},
       title={An inequality with applications to statistical estimation for 
              probabilistic functions of Markov processes and to a model for 
              ecology},
        date={1967},
     journal={Bulletin of the American Mathematical Society},
      volume={73},
       pages={360--363},
}

\bib{Welch}{article}{
      author={Welch, L. R.},
       title={Hidden Markov models and the Baum-Welch algorithm},
        date={2003},
     journal={IEEE Information Theory Society Newsletter},
      volume={53},
      number={1},
       pages={10--13},
}

\bib{DLR}{article}{
      author={Dempster, A. P.},
      author={Laird, N. M.},
      author={Rubin, D. B.},
       title={Maximum likelihood from incomplete data via the EM algorithm},
        date={1977},
     journal={Journal of the Royal Statistical Society, Series B},
      volume={39},
       pages={1--38},
}

\end{biblist}

\end{bibdiv}

\end{document}